\newcommand{\mdp}{m}
\newcommand{\hmdp}{\bar{m}}
\newcommand{\smdp}{\mathcal{M}}
\newcommand{\sS}{\mathcal{S}}
\newcommand{\sA}{\mathcal{A}}
\newcommand{\reset}{\mathrm{RESET}}
\newtheorem{definition}{Definition}
\newtheorem{assumption}{Assumption}
\newcommand{\sSa}{\mathcal{S}^\rightarrow}
\newtheorem{theorem}{Theorem}
\newtheorem{lemma}{Lemma}
\newcommand{\sK}{\mathcal{K}}
\newcommand{\sP}{\mathcal{P}}
\newcommand{\ucbexp}{\textsc{UcbExplore}}
\newcommand{\de}{b}
\newcommand{\vic}{v}
\newcommand{\bP}{\mathbb{P}}
\newcommand{\checkterm}{\alpha}
\newcommand{\sStreams}{\mathrm{STM}}
\newcommand{\stream}{\mathrm{p}}
\newcommand{\defined}{\coloneqq}
\newcommand{\sNExp}{\mathcal{T}}
\newcommand{\alg}{\textsc{MNM}}
\begin{document}

%

%

\twocolumn[

\aistatstitle{Autonomous exploration for navigating in non-stationary CMPs}

\aistatsauthor{ Pratik Gajane$^1$ \And Ronald Ortner$^1$ \And  Peter Auer$^1$ \And Csaba Szepesv{\'a}ri$^{2,3}$ }

\aistatsaddress{ $^1$Montanuniversit{\"a}t Leoben \And  $^2$DeepMind \And $^3$University of Alberta } ]

\begin{abstract}
 We consider a setting in which the objective is to learn to navigate in a controlled Markov  process (CMP) where
 transition probabilities may abruptly change. 
 For this setting, we propose a performance measure called \textit{exploration steps} which counts the time steps at which the learner lacks \textit{sufficient} knowledge to navigate its environment \textit{efficiently}. We devise a learning meta-algorithm, \alg, and prove an upper bound on the exploration steps in terms of the number of changes.  
\end{abstract}

\section{Introduction}
The ability to quickly learn to reliably control one's environment is core to the functionality of intelligent agents.
Throughout the last decades, much work has been devoted to the design and testing of various algorithms targeted at this task, under various names such as 
learning using intrinsic motivation, intrinsic reward, curiosity-driven learning, etc.
A necessarily incomplete sample of prior works in the area includes that of \citet{Schmidhuber1991,Singh2004,Oudeyer2007,Oudeyer2007b,Baranes2009,Schmidhuber2010,Singh2010,lopes2012exploration,gottlieb2013information,StLeAb15,houthooft2016variational,AchSa17,ostrovski2017count,pathak2017curiosity,haber2018learning,BuEdPa19,AzPiPi19,HaKaSiVS19}. 
Conceptually, the problem can be thought of as learning to reliably navigate an unknown environment. In this article we focus on this problem, and in particular, on learning to navigate in the face of a changing, or nonstationary environment.
Following \citet{ucbexp}, we consider the case when an agent interacts with a controlled Markov process (CMP) %
 equipped with finitely many actions and at most countably many states, the state is observable after every transition and a reset action is available which brings the agent back to some initial state. The problem then is to minimize the number of steps where the agent lacks the ability to reliably navigate to safely reachable states. Since the number of states is unbounded, the agent is given as input a `radius' $L$ such that it needs to consider all states that are reachable within $L$ steps (precise definitions will be given in the next section). 
\citet{ucbexp} gave an algorithm that with high probability finishes the discovery task in time that is proportional to the product of $L^3$ and the number of states to be discovered. 
Unlike this previous work, we consider the case \emph{when the transition probabilities can (abruptly) change}. This setting
is important as agents with a long ``lifespan'' may expect their environment to change: 
``moving parts'' can suddenly break down as commonly experienced in robotics or more generally in automation \citep{doi:10.1177/0278364913495721}, or the environment may change abruptly due to the appearance or disappearance of other agents, or objects, such as in rescue robots in urban search and rescue missions in unknown environments  \citep{8606991}. 
The time when the changes happen or the nature of the changes are unknown. In this new setting, we consider the problem of minimizing the number of exploration steps: A time step is considered an exploration step if at that time step the agent 
lacks sufficient knowledge to navigate its current environment efficiently. The challenge is of course that the agent may not be aware of when it does not have this sufficient knowledge. For this problem we give a meta-algorithm \alg\ which can utilize any base algorithm designed for the stationary version of the problem and which keeps the number of exploration steps below $O(F^2)$ when the number of environment changes is $F$.
Changing environments have been studied in the context of reinforcement learning 
(see e.g., 
\citet{NIPS2004_2730,
NIPS2013_4975, 
UAI2019}). However, our problem setting fundamentally differs from these works as the external rewards are absent and as such our performance metric is incomparable. 
\section{Problem Setting}
\label{sec:Setting}
We consider a discrete-time controlled Markov process -- a Markov decision process where rewards are absent. We assume a countable, possibly infinite state space $\sS$ and a finite action space $\sA$ with $A=|\sA|$ actions. Upon executing an action $a \in \sA$ in state $s \in \sS$ at time $t$, the environment transitions into the next state $s' \in \sS$ selected randomly according to the unknown transition probabilities $P_t(s'|s,a)$. 
In order to define the performance measure for our problem, we make use of some of the preliminary definitions and an assumption from \citet{ucbexp} (Definitions \ref{defn1}--\ref{defn3} and Assumption \ref{assump1} below), which assume $P_t=P$. We assume that the reader is familiar with terminology of Markov decision processes which we borrow from.

The learning agent is expected to solve the autonomous exploration problem in which the goal is to find a policy for each \textit{reachable state} from a starting state $s_0$, which we will fix for the rest of the article, and hence will be omitted from any notation.
\begin{definition}[Navigation time]
\label{defn1}
For any (possibly non-stationary) policy $\pi$, let $\tau(s|\pi)$ be 
the expected number of steps before reaching~$s$ for the first time when executing policy $\pi$ starting from $s_0$. 
\end{definition}

The learner will be given a number $L>0$ and we may naively demand that it finds all states reachable in at most $L$ steps:
\begin{definition}[$S_L$]
\label{defn2} 
We let $S_L$ denote 
$
S_L \defined  \{s \in \sS : \min_{\pi}\tau(s|\pi) \leq L\}.
$
\end{definition}
Since the state space might be infinite, a learner could wander off in some direction or get stuck without being able to return to the starting state. To exclude this possibility, we make the following assumption. 
\begin{assumption}
\label{assump1}
In every state, there is a designated $\reset$ action available, that will transition back to the starting state $s_0$ with probability $1$.
\end{assumption}

We define a {\em policy $\pi$ on $\sS' \subset \sS $} to be a policy with~$\pi(s)=\reset$ for any~$s \not\in \sS'$.
As it turns out, in general it is too much to ask for learners to discover all the states in $S_L$. Rather, following \citet{ucbexp} we require learners to discover only the so-called incrementally discoverable states, $\sSa_L$.  
\begin{definition}[$\sSa_L$]
\label{defn3}
Let $\prec$ be some partial order on $\sS$. The set $\sS^\prec_L$ of states reachable in $L$ steps with respect to ~$\prec$, is defined inductively as follows:
\begin{itemize}
\item $s_0 \in \sS^\prec_L$,
\item if there is a policy $\pi$ on $\{s' \in \sS^\prec_L: s' \prec s\}$ with $\tau(s|\pi) \leq L$, then~$s \in \sS^\prec_L$.
\end{itemize}
The set $\sSa_L$ of states {\em reachable in $L$ steps in respect to some partial order} is given by 
$ \sSa_L \defined \bigcup_\prec \sS^\prec_L $, where the union is over all possible partial orders.
\end{definition}

Back to the nonstationary case, 
we define the number of changes in the environment as
\[
F \defined \#\{1 \leq t | \exists s', s, a : P_{t-1}(s'|s,a) \neq P_t(s'|s,a) \}.
\]
For notational convenience, we assume that $P_0(s'|s,a) \neq P_1(s'|s,a)$ for some $(s,a)$, thereby always counting the first change at $t=1$. Therefore,
\begin{equation}
\label{def:F}
\text{\#changes} = \text{\# different CMP settings} = F.     
\end{equation}

Next we define the performance measure we propose for the considered problem setting. 

\begin{definition}[Exploration steps] 
\label{def:expsteps}
The $(L,\epsilon)$-exploration steps are the complement of the set
$\sNExp$, where $\sNExp$ contains the time steps $t$ at which the learner
\begin{itemize}
    \item has identified a set $\sK \supseteq \sSa_L$ for the CMP with transition probabilities $P_t$, and
    \item has a policy $\pi_s$ for every state $s \in \sK$ with $\tau(s|\pi_s) \leq (1 + \epsilon)L$ for the transition probabilities $P_t$.
\end{itemize} 
\end{definition}
The set of exploration steps contains the time steps for which the learner doesn't have \textit{sufficient} knowledge about the current CMP structure in order to navigate to reachable states from $s_0$ efficiently. The learner's aim is to be able to efficiently navigate the current CMP structure at most of the time steps, or equivalently to minimize the number of exploration steps. 

\textbf{Introduction to \ucbexp \citep{ucbexp}}:
Before we illustrate our meta-algorithm using \ucbexp\ as a subroutine, let us take a look at a few relevant details. \ucbexp \ alternates between two phases: \textit{state discovery} and \textit{policy evaluation}. In a state discovery phase, new candidate states are discovered as potential members of the set of reachable states. In a policy evaluation phase, the optimistic policy $\pi_s$ for reaching one of the candidate states $s$ is evaluated to verify if $\pi_s$ is \textit{acceptable}\footnote{By an acceptable policy, we mean any policy $\pi_s$ such that $\tau(s|\pi_s) \leq (1+\epsilon)L$.}.
A policy evaluation phase for any $\pi_s$ lasts for a certain number of episodes. Each episode begins at $s_0$ and ends either when $\pi_s$ successfully reaches $s$ or $\left\lceil \left( 1 + \tfrac{1}{\epsilon}\right)L \right\rceil$ steps have been executed. If $s$ is not reached in a suitably high number of episodes, policy evaluation for $\pi_s$ is said to have \textit{failed}. A successful policy evaluation means a new reachable state and an acceptable policy have been discovered. A failed policy evaluation leads to selecting another candidate state-optimistic policy pair for evaluating while a successful policy evaluation leads to a state discovery phase which in turn adds more candidate states for the subsequent policy evaluation phases. 
We restate the main result of \citet{ucbexp} below for reference. 
\begin{theorem}\label{thm:ref}[\citet[Theorem 8]{ucbexp}]
When algorithm \ucbexp \ is run on a stationary CMP problem (i.e $\forall t, P_t(\cdot|s,a) = P(\cdot|s,a) $) with inputs $s_0$, $\mathcal{A}$, $L\geq 1$, $\varepsilon>0$, and $\delta$, then with probability $1-\delta$ 
\begin{itemize}
\item it discovers a set of states $\mathcal{K} \supseteq \sSa_L$;
\item  for each $s \in \mathcal{K}$, it outputs a policy $\pi_s$ with $\tau(s|\pi_s) \leq (1+\varepsilon)L$, and
\item it terminates after $O\left(\frac{SAL^3}{\varepsilon^3}\left(\log\frac{SAL}{\varepsilon\delta}\right)^3 \right)$
exploration steps,
where $S=|\mathcal{K}| \leq |\sSa_{(1+\varepsilon)L}|$.
\end{itemize}
\end{theorem}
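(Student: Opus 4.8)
The plan is to follow the standard optimism-based template for PAC-style exploration, specialized to the navigation-time objective; the three claims (completeness of the discovered set, quality of the returned policies, and the exploration-step budget) all rest on a single high-probability event on which the empirical transition estimates are accurate, so I would establish this event first. For each state-action pair $(s,a)$ and each possible number $n$ of observed samples, I would define an $\ell_1$-confidence region around the empirical transition distribution $\hat P(\cdot\mid s,a)$ of radius $O(\sqrt{\log(SA/\delta)/n})$ via a Weissman-type bound, and let $G$ be the event that the true $P(\cdot\mid s,a)$ lies in every such region simultaneously. A union bound over the (a priori unbounded) collection of candidate state-action pairs and sample counts, the latter handled by a peeling/doubling argument over $n$, gives $\Pr[G]\ge 1-\delta$, and all subsequent reasoning is deterministic on $G$.

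Next I would prove optimism together with soundness of a single policy-evaluation phase, which yields the second bullet. On $G$, the optimistic navigation time $\tilde\tau(s)$ computed from the confidence sets lower-bounds $\min_\pi \tau(s\mid\pi)$, so any state with $\min_\pi\tau(s\mid\pi)\le L$ is never discarded as infeasible. For the quality guarantee I would relate the expected navigation time to the per-episode success probability: since each episode is truncated at $\lceil(1+\tfrac1\epsilon)L\rceil$ steps, Markov's inequality shows a policy with $\tau(s\mid\pi)\le L$ reaches $s$ within the budget with probability at least $\tfrac{\epsilon}{1+\epsilon}$, while a Chernoff bound on the empirical success frequency over the prescribed number of episodes certifies, on the other side, that any policy \emph{passing} the test must satisfy $\tau(s\mid\pi)\le(1+\epsilon)L$. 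This simultaneously guarantees that genuinely reachable candidates pass and that whatever passes is acceptable.

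I would then obtain $\sK\supseteq\sSa_L$ by induction on the partial-order structure of Definition \ref{defn3}: if a partial order $\prec$ witnesses $s\in\sSa_L$ and all $\prec$-predecessors of $s$ already lie in $\sK$, then the optimistic policy restricted to the discovered set attains optimistic time $\le L$, so by the soundness argument its evaluation eventually succeeds and $s$ is added; finiteness of $\sSa_L$ closes the induction. For the exploration-step count I would sum over phases: each discovered state spawns at most $A$ new candidates, bounding the number of candidate-policy pairs by $O(SA)$; for each, the number of episodes until the optimistic policy stabilizes is governed by how many samples are needed to shrink the $O(L)$ relevant confidence radii to accuracy $O(\epsilon/L)$, which is $\tilde O(L^2/\epsilon^2)$ per relevant transition, and each episode costs $O(L/\epsilon)$ steps. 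Multiplying these factors and folding the union-bound logarithms into the $\log^3$ term reproduces the claimed $O\!\left(\frac{SAL^3}{\varepsilon^3}\left(\log\frac{SAL}{\varepsilon\delta}\right)^3\right)$ bound.

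The step I expect to be the main obstacle is this last counting argument. Because the number of candidate states is not known in advance, the union bound defining $G$ and the per-candidate episode counts must be organized so that optimism, the incremental discovery order, and the progressively shrinking confidence radii interact cleanly, without either compounding the failure probability across an unbounded candidate set or losing the sharp polynomial dependence on $L$ and $1/\epsilon$; getting the exponents (the cubic in $L$ and in $1/\epsilon$, and the triple logarithm) exactly right is where the care lies.
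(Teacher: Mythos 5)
This statement is not proved in the paper at all: it is Theorem~8 of \citet{ucbexp}, restated explicitly ``for reference,'' and \alg\ uses it strictly as a black box (to set the check-run budget $W_r$, the episode truncation length, and the failure probability $\delta'_r$). So there is no proof in this paper against which your argument can be checked; the only meaningful comparison is with the original analysis of Lim and Auer, which lies outside this paper. Any review of your sketch is therefore a review of a reconstruction of external work, not of an alternative to something the authors do here.

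Taken on its own terms, your outline does mirror the architecture of the original analysis (confidence regions plus optimism, episodes truncated at $\left\lceil (1+\tfrac{1}{\epsilon})L\right\rceil$ steps, a Markov-inequality argument for per-episode success, induction over the partial order of Definition~\ref{defn3}, and a product-of-counts bound), but it has concrete gaps as a proof. First, the Markov constant is inverted: a policy with $\tau(s|\pi)\le L$ fails a truncated episode with probability at most $L/\bigl(\left(1+\tfrac{1}{\epsilon}\right)L\bigr)=\tfrac{\epsilon}{1+\epsilon}$, i.e.\ it \emph{succeeds} with probability at least $\tfrac{1}{1+\epsilon}$; the figure $\tfrac{\epsilon}{1+\epsilon}$ you state for the success probability is the bound on the complement, and if one really only had that weak a success rate, the episode counting downstream would not yield the claimed $L^3/\varepsilon^3$ dependence. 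Second, the direction ``whatever passes the test satisfies $\tau(s|\pi)\le(1+\epsilon)L$'' does not follow from a Chernoff bound on the empirical success frequency alone: an empirical success rate over truncated episodes bounds nothing about expected navigation time unless you specify what the certified policy does after a failed episode (restart at $s_0$ via $\reset$) and convert ``success probability at least $p$ per episode of length at most $H$'' into an expected-time bound of roughly $H(1-p)/p$ plus the conditional success time; the thresholds must then be tuned so this lands below $(1+\epsilon)L$, and this conversion is precisely where the constants and the $\epsilon$-dependence come from. Third, the theorem's claim $S=|\mathcal{K}|\le|\sSa_{(1+\varepsilon)L}|$ — which this paper's Lemmas \ref{lem:buidlingphase} and \ref{lem:checkingphase} actually rely on — is asserted nowhere in your sketch: you need an argument that the algorithm never certifies a state outside $\sSa_{(1+\varepsilon)L}$, which is again a consequence of the soundness direction of the evaluation test.
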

\section{Meta-algorithm for autonomous exploration in non-stationary CMPs} 
\label{sec:Alg}
\begin{figure*}
\centering
\fbox{
\begin{minipage}{\textwidth}
\begin{multicols}{2}
\begin{algorithmic}[1]
\Statex \textbf{Input}:
A confidence parameter $\delta$,
an error threshold $\varepsilon>0$, $L\geq 1$,
$\mathcal{A}$, $s_0$, constants $C_1>0$ and $C_2>0$.
\Statex For round $r=1,2, \dots$
\Statex \textbf{Building phase:}
           
                 \State Initialize $\sStreams = \{\}$. The set $\sStreams$ is used to store the set of initiated streams in round $r$ so far.
                 \State \label{step:stream}\textbf{Stream handling:} Let $q_r$ indicate the current quantum of time steps within the building phase of round $r$. The length of $q_r$ is determined dynamically (explained below in step \ref{step:lenghtq}) but is at most $\left\lceil \left( 1 + \tfrac{1}{\epsilon}\right)L \right\rceil$.  
                 Let $\delta'_r = \tfrac{3 \delta}{4 \pi^2 r^2}$. 
                 \Statex For $q_r=1,2, \dots$ 
                        \begin{enumerate}[(a)]
                            \item \label{step:init} \textbf{Initiation rule:} For any integer $\stream\geq1$, if $q_r = (\stream-1)^2 + 1$, then add $\stream$ to $\sStreams$. Initiate a new copy of \ucbexp$(\delta'_r,  \epsilon, L, \sA, s_0)$ and associate it with stream $\stream$. This copy of \ucbexp \ acts only according to the samples taken from the time steps at which $p$ is active.
                            \State \label{step:alloc} \textbf{Allocation rules:}
                                \begin{enumerate}[label=(\roman*)]
                                \item \label{step:alloc1} If $q_r = 1$, activate the only initiated stream in $\sStreams$ so far i.e. $\stream_{q_r} = 1$.
                                \item \label{step:alloc2} Otherwise if all the initiated streams in $\sStreams$ have been active for equal number of quantums previously,  then 
                                $\stream_{q_r} = $ least recently active stream in $\sStreams$.
                                \item \label{step:alloc3} Otherwise $\stream_{q_r} = $ the stream in $\sStreams$ which has been active for the least number of quantums previously. 
                                \end{enumerate}
                            \State \label{step:lenghtq} 
                            If the copy of \ucbexp \ associated with the stream $\stream_{q_r}$ is in a state discovery phase, run it for $\left\lceil \left( 1 + \tfrac{1}{\epsilon}\right)L \right\rceil$ time steps. Otherwise the copy of \ucbexp \ associated with the stream $\stream_{q_r}$ is in a policy evaluation phase, and then run it for an episode \big(which is always $\leq \left\lceil \left( 1 + \tfrac{1}{\epsilon}\right)L \right\rceil$ time steps\big) of policy evaluation of \ucbexp \ 
                            i.e.,
                            $$
                            |q_r| = 
                            \begin{cases}
                              \left\lceil \left( 1 + \tfrac{1}{\epsilon}\right)L \right\rceil, \text{if $\stream_{q_r}$ in state discovery} \\
                              |\text{episode of policy evaluation of $\stream_{q_r}$}|
                            \end{cases}
                            $$
                            \State \label{step:buildEnd} \textbf{Check for the end of building phase:} If during $q_r$, the copy of \ucbexp \ associated with the active stream terminates and provides a set of reachable states and acceptable policies for them, record them in $\sK_r$ and $\sP_r = \{ \pi_s, \forall s \in \sK_r \}$ respectively. Terminate all the other initiated streams in $\sStreams$ and proceed to the checking phase. Otherwise proceed to next $q_r$.
                        \end{enumerate}
    \Statex \textbf{Checking phase:} 
       \Statex Compute $W_r = \tfrac{ C_1 |\mathcal{K}_r| AL^3}{\epsilon^3} \left( \log{  \tfrac{ C_2 |\mathcal{K}_r|AL}{\epsilon \delta'_r}} \right)^3$ and $\checkterm_r = \sqrt{\frac{ \log(1/ \delta'_r)}{ 2 \left( \log\left( |\sK_r|AL/\epsilon \delta'_r\right) \right)^3}}$. Let a single check-run consist of the following two parts in the given order: a new copy of \ucbexp$(\delta'_r, \epsilon, L, \sA, s_0)$ running for up to $W_r$ time steps and a policy evaluation phase of \ucbexp \ for each of the policies in $\sP_r$. 
       If the first part of any check-run doesn't terminate within $W_r$ time steps, then terminate it manually and proceed to the second part of the check-run. Set $n_r = \left( \log{ \tfrac{|\sK_r| AL}{\epsilon \delta'_r}} \right)^3$. Execute $n_r$ check-runs. Then: 
            \State\label{Start:checking} Let $\de$ be the number of times \ucbexp \  has failed to terminate within $W_r$ time steps in the first part during the last $n_r$ check-runs. If
            \begin{equation}\label{eq:test1}
                \frac{\de}{n_r} > \checkterm_r + \delta'_r, 
            \end{equation}
            then stop the checking phase, set $r \gets r +1$ and start a new round, otherwise proceed to next step. 
            \State For every state $s$ in $\mathcal{K}_r$, let $\de'_s$ be the number of times policy evaluation fails for $\pi_s \in \sP_r$ in the second part of the last $n_r$ check-runs. If
            \begin{equation}\label{eq:test2}
                \frac{\de'_s}{n_r}  > \checkterm_r + \delta'_r,
            \end{equation}
            delete $s$ and $\pi_s$ from $\sK_r$ and $\sP_r$ respectively. Proceed to next step.
             \State Let $s$ be any state which was absent in $\mathcal{K}_r$, but has appeared in the output of at least one of the first part of the last $n_r$ check-runs. For every such state $s$, let $\vic_s$ be the number of times $s$ was present in the output of the first part of the last $n_r$ check-runs. If
            \begin{equation}\label{eq:test3}
            \delta'_r - \left(1 - \frac{\vic_s}{n_r}\right) >  \checkterm_r ,
            \end{equation}
            add $s$ and the last found policy for $s$ to $\sK_r$ and $\sP_r$ respectively. Proceed to next step. 
            \State Execute a check-run one more time. 
            Go back to step \ref{Start:checking} of checking phase. 
\end{algorithmic}
\end{multicols}
\end{minipage}
}
\caption{\underline{M}eta-algorithm for autonomous exploration in \underline{n}on-stationary C\underline{M}Ps or \alg}
\label{fig:alg}
\end{figure*}
Our meta-algorithm (\underline{M}eta-algorithm for autonomous exploration in \underline{n}on-stationary C\underline{M}Ps or \alg) can use any algorithm designed for autonomous exploration in a stationary CMP as a \emph{subroutine}. In Figure 1, for the sake of specificity, we describe the algorithm using \ucbexp \ \citep{ucbexp} as a subroutine. 

The algorithm proceeds in rounds and each round consists of two phases: a building phase and a checking phase. In a building phase, we build a hypothesis which consists of a set of states and an acceptable policy for each of them. In a checking phase, we check if the hypothesis we built in this round is still valid. In any building phase, the algorithm initiates several copies of the subroutine at different time steps (see \ref{step:stream}\ref{step:init} in Figure \ref{fig:alg}) and switches back and forth between them (see \ref{step:stream}\ref{step:alloc} in Figure \ref{fig:alg}). Once it switches to a copy of the subroutine, that subroutine is said to be to active and it remains so until the next switch. To simulate this approach, our algorithm proceeds in \textit{streams}. A stream is a single run of the subroutine acting only according to the previous time-steps for which the said stream is active. At any time step, only a single stream is active. Once a stream is active it stays so for a quantum of time steps, the length of which is determined dynamically (see \ref{step:stream}\ref{step:lenghtq} in Figure \ref{fig:alg}). When a hypothesis is formed in the building phase of a round $r$, it is stored in $\sK_r$ and $\sP_r$ (see \ref{step:stream}\ref{step:buildEnd} in Figure \ref{fig:alg}) and the algorithm moves on to the checking phase. 

In the checking phase, recent history is examined, by employing a sliding window, to detect various kinds of changes in the hypothesis. When the hypothesis is found to be valid no more on account of a change, the algorithm terminates the checking phase and proceeds to the next round. In the checking phase, our algorithm employs the subroutine as a black-box using the upper bound on the exploration steps required by the subroutine for a stationary CMP problem. 
Using Theorem \ref{thm:ref}, the upper bound is $O(\tfrac{SAL^3}{\epsilon^3} \left( \log{\tfrac{SAL}{\epsilon\delta}} \right)^3)$ for \ucbexp. We use this bound to compute $W_r$ for each round $r$ with suitable constants $C_1$ and $C_2$.  

 At any time step $t$, our algorithm's knowledge of the current CMP structure is represented by  $\sK_{r-1}$ and $\sP_{r-1}$ where $r$ the current round at $t$. When the current round $r=1$, the algorithm is yet to learn the present CMP structure. 

\alg \ can use any algorithm designed for autonomous exploration in a stationary CMP as a subroutine if it is provided with two values:
\begin{itemize}
    \item the length of the quantum i.e. the number of contiguous time steps for which a copy of the subroutine (i.e., a stream) must be active, and 
    \item a high-probability upper bound on the number of exploration steps required by the subroutine for a stationary CMP problem.
\end{itemize}
These two values are used in  Step \ref{step:stream}\ref{step:lenghtq} and the computation of $W_r$ at the beginning of a checking phase respectively (see Figure \ref{fig:alg}). Using another algorithm as a subroutine instead of \ucbexp \ would only cause these two changes with the rest of \alg \ remaining the same. 

Our main result, stated in Theorem \ref{thm:main}, upper-bounds the number of exploration steps required by \alg \ using \ucbexp \ as a subroutine. The corresponding result while using other subroutines could simply be obtained by replacing the upper bound of exploration steps required by \ucbexp \ for a stationary CMP with the analogous bound of the subroutine being used. 

\begin{theorem}
\label{thm:main}
With probability $1 - \delta$, the total number of exploration steps for \alg \ using \ucbexp \ as a subroutine and with inputs $s_0$, $L \geq 1$, $\epsilon$, $\delta$, $C_1 = 216\cdot (15)^2 + 61$ and $C_2 = 225$  is upper bounded by 
\begin{align*}
&\left(\sum_{f=1}^{F} \frac{ C_1 S_f A L^3}{\epsilon^3} \left( \log\frac{4 \pi^2 C_2 F^2 S_f AL}{3 \epsilon \delta} \right)^3 \right)^2   \\
& + F \max_{f \in \{1, \dots, F\}} \left[\frac{ 2 C_1 S_{f} A L^3}{\epsilon^3} \left( \log\frac{4 \pi^2 C_2 F^2 S_{f} AL}{3 \epsilon \delta} \right)^6 \right], 
\end{align*}
 where $S_f = |S^{\rightarrow}_{(1+\epsilon)L}(f)|$ is the number of incrementally discoverable states reachable in $(1+ \epsilon)L$ time steps in the $f^\text{th}$ CMP setting, and $\#$changes = $F$.
\end{theorem}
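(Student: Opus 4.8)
The plan is to split the total exploration count into the two summands of the bound: the time spent in building phases, every step of which is an exploration step because the learner is re-estimating the CMP, and the \emph{detection delay} inside checking phases, which is the only part of a checking phase that contributes exploration steps (while a hypothesis is still valid the learner meets both conditions of Definition \ref{def:expsteps}, so those steps lie in $\sNExp$ and are free). First I would fix a single high-probability event on which (i) every copy of \ucbexp\ that is ever run on a contiguous block of samples from one stationary setting $P_f$ enjoys the guarantee of Theorem \ref{thm:ref}, and (ii) every test \eqref{eq:test1}--\eqref{eq:test3} in every checking phase behaves as designed. With $\delta'_r=\tfrac{3\delta}{4\pi^2 r^2}$ and $\sum_r r^{-2}=\pi^2/6$, a union bound over rounds keeps the total failure probability below $\delta$; since soundness of the tests means a new round is started only after a genuine change, the number of rounds is at most $F$, which also lets me replace $1/\delta'_r$ by $4\pi^2 F^2/(3\delta)$ inside every logarithm.

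For the building-phase term I would exploit the quadratic initiation schedule. The key deterministic fact is that, because stream $\stream$ is started at quantum $(\stream-1)^2+1$ and allocation rules \ref{step:alloc}(ii)--(iii) keep all active counts within one of each other, at total quantum $k^2$ every initiated stream has been active exactly $k$ times; hence a single stream reaches $N$ active quanta only after about $N^2$ total quanta. Writing $N_f$ for the number of active quanta a clean \ucbexp\ copy needs in setting $f$ (so $N_f\,|q_r|\approx c_f\defined\tfrac{C_1 S_f AL^3}{\epsilon^3}(\log\ldots)^3$), I would note that a setting $f$ cannot persist beyond the quantum at which the stream started at its onset gathers $N_f$ clean active quanta, which by the previous fact is at most $\max(N_f^2,\,q_{\mathrm{start}(f)}+N_f)$. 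An induction over the contiguous settings $I_r$ that a building phase traverses then gives that round $r$ uses at most $\left(\sum_{f\in I_r}N_f\right)^2$ quanta. As the blocks $I_r$ partition $\{1,\dots,F\}$, summing with $\sum_r\left(\sum_{f\in I_r}N_f\right)^2\le\left(\sum_{f=1}^F N_f\right)^2$ and $|q_r|\ge 1$ yields the first term $\left(\sum_{f=1}^F c_f\right)^2$.

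For the checking-phase term I would establish soundness and completeness of the sliding-window tests. Soundness: while the present setting still agrees with $\sK_r,\sP_r$, a Hoeffding bound with threshold $\checkterm_r+\delta'_r$ shows none of \eqref{eq:test1}--\eqref{eq:test3} fires, so valid checking steps are not exploration steps and no spurious round is created. Completeness: once a change invalidates the hypothesis, the relevant empirical failure (or appearance) frequency concentrates above the threshold, so the test fires within at most two windows of $n_r$ check-runs -- the first may straddle the change and the second is entirely post-change. Each window costs at most $n_r$ times the per-check-run budget $\Theta(W_r)$, i.e. $\Theta\!\left(\tfrac{C_1 S_r AL^3}{\epsilon^3}(\log\ldots)^6\right)$ because $n_r=(\log\ldots)^3$; multiplying the two-window detection delay by the at-most-$F$ rounds gives the second term.

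The main obstacle I expect is the checking-phase analysis, not the building-phase counting. One must calibrate $\checkterm_r$ and the window length $n_r$ so that all three tests are simultaneously sound (no false positive over the whole union bound, which is exactly what caps the round count and thereby the logarithmic factors) and complete within two windows against every admissible change -- a shrinking reachable set, a listed state that became unreachable via \eqref{eq:test2}, and a newly reachable state via \eqref{eq:test3} -- all while a single window may mix samples from two settings. Making the constants $C_1,C_2$ and the $\pm\delta'_r$ slack in the tests line up across this union bound is where the real effort lies; the quadratic-schedule lemma, by contrast, is a clean deterministic argument.
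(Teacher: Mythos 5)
Your overall plan is the paper's own: count every building-phase step as an exploration step, note that checking-phase steps are free while the active setting still matches $\sK_r,\sP_r$ (Definition \ref{def:expsteps}), bound the number of rounds by $F$ via soundness of test \eqref{eq:test1} (Lemma \ref{lem:numrounds2}), and bound the building phases through the quadratic stream schedule --- your ``at total quantum $k^2$ every initiated stream has been active exactly $k$ times'' is Lemma \ref{lem:StreamAlloc}, your cumulative-sum induction over the settings $I_r$ traversed in a round is the Case-2 argument of Lemma \ref{lem:buidlingphase}, and the step $\sum_r\bigl(\sum_{f\in I_r}N_f\bigr)^2\le\bigl(\sum_{f=1}^F N_f\bigr)^2$ is Lemma \ref{lem:allbuildingphase}. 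Up to that point the proposal is essentially correct and matches the paper.

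The genuine gap is in your checking-phase accounting, which is \emph{per round}: you claim each round's checking phase costs at most two windows of $n_r$ check-runs (detection delay), and multiply by the at-most-$F$ rounds. This step fails when a single round's checking phase contains several \emph{transient} changes: a change to an incompatible setting $\mdp'$ that lasts fewer than $n_r$ check-runs and then gives way to a setting again compatible with $\sK_r,\sP_r$ need never trigger any of the tests \eqref{eq:test1}--\eqref{eq:test3}, yet every step during which $\mdp'$ was active is an exploration step. If this repeats many times within one round, that round's checking-phase exploration steps grow linearly in the number of such changes and are not bounded by two windows, so your per-round bound is not a valid upper bound (your own completeness claim, ``once a change invalidates the hypothesis the test fires within two windows,'' is simply false for changes that do not persist). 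The paper's Lemma \ref{lem:checkingphase} avoids this by charging \emph{per change}, not per round: either the change lasts fewer than $n_r$ check-runs (Case 1: cost at most $2W_r n_r$ steps whether or not it is ever detected), or it persists and is then detected with probability $1-\delta'_r$ within $n_r$ check-runs (Case 2: same cost); Lemma \ref{lem:allcheckingphase} then sums over changes using $\sum_r F_r\le F$. Your final expression coincides with the paper's only because $F$ changes at one window each has the same order as $F$ rounds at two windows each; the fix is to replace the per-round detection-delay argument by the per-change case analysis.
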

Note that a change in this context affects the set of reachable states in $(1 + \epsilon)L$ steps from $s_0$ and/or the acceptable policies for reaching them. The reason, as noted by \citet{ucbexp}, is that the learner cannot distinguish between the states reachable in $L$ steps and those reachable in $(1 + \epsilon)L$ steps (given a reasonable amount of exploration). 

\textbf{Motivating factors for the construction of our algorithm}
\begin{itemize}
\item Before an algorithm forms a hypothesis i.e., it determines a set of reachable states and acceptable policies, it might not be possible to detect a change. Consider an algorithm still in the process of building a hypothesis. During this process, the algorithm must proceed and inspect states in some order. Suppose that it has found acceptable polices for some reachable states. When it finds a new reachable state, there are two plausible scenarios: \emph{a)} this state was not reachable when the algorithm was in the process of inspecting other states earlier i.e., there was a change, or \emph{b)} this state was reachable when the algorithm was in the process of inspecting other states earlier i.e., there was no change. It is not possible to distinguish between these two scenarios. 
\item Since it might not be possible to detect a change during the hypothesis building phase and a change can occur at any time, the algorithm needs to start several processes during the hypothesis building phase. Each process aims to form a hypothesis for a particular CMP setting and to be able to do that, it needs to act only on the time steps for which that CMP setting is in effect. On one hand, since a change can occur at any time step, the algorithm needs to start these processes at several time steps along the way. On the other hand, if too many processes are started, each process will not get enough time to form its hypothesis. Therefore sufficient time should be allocated to each process. Both of these diverging requirements can be balanced, if both the number of processes and the time allocated to each process grow asymptotically as the square-root of time, as done in \alg.

\item \textbf{Using a sliding window in the building phase is not possible:}
Using a sliding window in the checking phase is possible as each check-run is verifying the same hypothesis (the one found in the preceding building phase) and hence findings from successive check-runs can be shared. In the building phase however, each stream with its own copy of the subroutine might be attempting to build different hypotheses and hence findings from two different streams cannot be shared.  
\end{itemize}

\section{Analysis and Proof of Theorem \ref{thm:main}}
\label{sec:Proofs}
First, we bound the number of exploration steps in a single building phase. Then we prove that the number of rounds is upper-bounded by $\#$changes $F$. Combining these two, we prove an upper bound on the number of exploration steps for all the building phases. Next, we prove an upper bound on the number of exploration steps in a checking phase caused by a single change. Summing this over all the changes in all the rounds gives us an upper bound on the number of exploration steps in all the checking phases. Finally, we add the respective upper bounds for all the building phases and all the checking phases to arrive at the bound given by Theorem \ref{thm:main}. 

\subsection{Bounding the exploration steps in a single building phase}
First, we state a couple of preliminary lemmas about stream handling to be used later. 
\begin{lemma}
\label{lem:StreamCount}
At the end of any quantum $q$ in a building phase, the number of initiated streams is equal to $\lceil \sqrt{q} \rceil$.
\end{lemma}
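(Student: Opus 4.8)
The plan is to translate the initiation rule into an explicit formula for when each stream is created, and then reduce the claim to an elementary floor/ceiling identity.

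First I would read off from the initiation rule \ref{step:stream}\ref{step:init} that stream $\stream$ is added to $\sStreams$ precisely at the quantum $q_r = (\stream - 1)^2 + 1$. Hence at the end of a generic quantum $q$, the number of initiated streams equals the number of integers $\stream \geq 1$ satisfying $(\stream - 1)^2 + 1 \leq q$, equivalently $(\stream - 1)^2 \leq q - 1$. Since $\stream - 1 \geq 0$, this is the same as $\stream - 1 \leq \sqrt{q - 1}$, so the count of admissible $\stream$ is $1 + \lfloor \sqrt{q - 1} \rfloor$.

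It then remains to prove the purely arithmetic identity $1 + \lfloor \sqrt{q-1} \rfloor = \lceil \sqrt{q} \rceil$ for every integer $q \geq 1$. I would do this by setting $n = \lceil \sqrt{q} \rceil$ and unfolding both definitions: $\lceil \sqrt{q} \rceil = n$ is equivalent to $(n-1)^2 < q \leq n^2$, and because $q$ is an integer this is the same as $(n-1)^2 + 1 \leq q \leq n^2$. On the other side, $\lfloor \sqrt{q-1} \rfloor = n - 1$ is equivalent to $(n-1)^2 \leq q - 1 < n^2$, i.e.\ (again using integrality) $(n-1)^2 + 1 \leq q \leq n^2$. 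The two bracketing conditions coincide, which establishes the identity on the whole range of $q$. Alternatively, and perhaps more transparently, one could argue by induction on $q$: the left-hand count jumps by exactly one when $q$ passes a value of the form $(\stream - 1)^2 + 1$ and is otherwise constant, while $\lceil \sqrt{q} \rceil$ exhibits the same jump pattern (increasing from $n-1$ to $n$ exactly at $q = (n-1)^2 + 1$); matching the base case $q = 1$ then closes the induction.

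I do not expect any real obstacle here: the statement is a bookkeeping fact about the deterministic initiation schedule, entirely independent of the stochastic behaviour of \ucbexp\ and of the allocation rules \ref{step:stream}\ref{step:alloc}. The only point requiring a little care is the boundary behaviour at perfect squares and at $q = 1$, where one must keep the floor and ceiling inequalities strict or non-strict exactly as dictated by their definitions; the integrality of $q$ is precisely what makes the two sides align.
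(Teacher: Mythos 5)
Your proof is correct and follows essentially the same route as the paper's: both arguments reduce the claim, via the initiation rule, to the elementary fact that the unique integer $n$ with $(n-1)^2 + 1 \leq q \leq n^2$ is $\lceil\sqrt{q}\rceil$. The paper phrases this by sandwiching the highest initiated stream index $\hat{\stream}$ between $\sqrt{q}$ and $\sqrt{q}+1$, whereas you count the initiated streams as $1+\lfloor\sqrt{q-1}\rfloor$ and then verify the floor/ceiling identity; the difference is purely one of bookkeeping.
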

\iftoggle{long-version}{The proof for Lemma \ref{lem:StreamCount} is given in Appendix \ref{Proof:StreamCount}.}{Please consult the extended version of this article for the complete proof of this Lemma.}
Here, we provide a brief overview. We use the fact that the $\#$initiated streams is equal to the highest stream number initiated so far and the stream initiation rule (\ref{step:stream}\ref{step:init} in Figure \ref{fig:alg}) to arrive at this claim. 

\begin{lemma}
\label{lem:StreamAlloc}
At the end of a quantum $q = b^2$ for some integer $b \geq 1$,
\begin{enumerate}
    \item $b$ streams have been initiated, and
    \item each initiated stream has been active for exactly $b$ quantums.   
\end{enumerate}
\end{lemma}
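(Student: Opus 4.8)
The plan is to prove both parts by induction on $b$, using Lemma~\ref{lem:StreamCount} for the count of initiated streams and a potential (total-activation) argument for the balancing. Part~1 is immediate: by Lemma~\ref{lem:StreamCount}, at the end of $q=b^2$ the number of initiated streams is $\lceil\sqrt{b^2}\rceil=b$. The one structural fact I would extract from the allocation rules (step~\ref{step:alloc} in Figure~\ref{fig:alg}) before starting is this: at every quantum the activated stream has \emph{minimal} active-count (the number of quantums for which it has been active so far) among the currently initiated streams. Rule~\ref{step:alloc3} states this directly; rule~\ref{step:alloc2} is the special case where all counts are equal, so every stream is a minimizer; and rule~\ref{step:alloc1} is the first quantum. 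Recording this observation is what lets me sidestep the unspecified tie-breaking inside rule~\ref{step:alloc3}.

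For the base case $b=1$, quantum $q=1$ activates stream~$1$ by rule~\ref{step:alloc1}, and by Lemma~\ref{lem:StreamCount} it is the only initiated stream, so it has been active for exactly $1=b$ quantum. For the inductive step, I assume the claim for $b-1$: at the end of quantum $(b-1)^2$ there are $b-1$ streams, each active for exactly $b-1$ quantums. By the initiation rule (step~\ref{step:init}) stream~$b$ is initiated precisely at quantum $(b-1)^2+1$, and the next initiation (stream $b+1$) occurs at $b^2+1$; hence throughout the interval of quantums $(b-1)^2+1,\dots,b^2$ exactly $b$ streams are present and no new ones appear. This interval has $b^2-(b-1)^2=2b-1$ quantums, and at its start the active-counts are $b-1$ for each of streams $1,\dots,b-1$ and $0$ for the freshly initiated stream~$b$, so the total active-count is $(b-1)^2$.

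The balancing then follows from a potential argument. Each quantum increases the total active-count by exactly $1$, so at the end of the interval the total is $(b-1)^2+(2b-1)=b^2$. I claim no stream ever reaches count $b+1$: if some stream were activated while already at count $b$, then by the minimality observation every stream would have count $\ge b$ at that instant, giving a total of at least $b\cdot b=b^2$ just before that activation; but just before it at most $2b-2$ quantums of the interval are complete, so the total is at most $(b-1)^2+(2b-2)=b^2-1<b^2$, a contradiction. Hence every stream ends with count at most $b$; since there are $b$ streams whose counts sum to $b^2=b\cdot b$ and each is at most $b$, each must equal exactly $b$, which is part~2. The only delicate point is this balancing step: a naive round-robin reading would force one to resolve the tie-breaking of rule~\ref{step:alloc3}, so the \textbf{main obstacle} is handled by the potential argument, which uses only that the served stream has minimal count and that the interval length $2b-1$ exactly matches the required total increase, so that no stream can overshoot $b$ and, by the count, none can fall short.
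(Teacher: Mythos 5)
Your proof is correct, and while it shares the same skeleton as the paper's (part~1 read off from Lemma~\ref{lem:StreamCount}, part~2 by induction on $b$ with the same base case), the inductive step is argued by a genuinely different method. The paper's proof simulates the schedule explicitly: after stream $\hat{b}+1$ is initiated at quantum $\hat{b}^2+1$, it is the unique minimizer and so is activated for $\hat{b}$ consecutive quantums until it catches up, after which all $\hat{b}+1$ streams are equal and are served once each in round-robin fashion over the remaining $\hat{b}+1$ quantums, landing exactly at $(\hat{b}+1)^2$. (Incidentally, the paper's citations of allocation rules (ii) and (iii) for these two sub-phases appear to be swapped, and the round-robin sub-phase itself silently relies on unspecified tie-breaking among equal-count streams.) You instead extract a single invariant --- the activated stream always has minimal active-count, which subsumes rules (i), (ii) and (iii) --- and run a counting/potential argument: the total count rises by exactly $1$ per quantum from $(b-1)^2$ to $b^2$ over the interval, minimality makes an activation at count $b$ impossible before the total could reach $b^2$, so no stream overshoots, and pigeonhole on $b$ streams summing to $b^2$ with each at most $b$ forces all counts to equal $b$. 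What your route buys is robustness: the conclusion holds for \emph{any} tie-breaking consistent with minimality, so you never need to determine the exact activation order, which is precisely the detail the paper's simulation leaves informal. What the paper's route buys is a concrete picture of the schedule (catch-up followed by round-robin), which is useful intuition for how the algorithm behaves but is not needed for the lemma.
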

\iftoggle{long-version}{The proof for Lemma \ref{lem:StreamAlloc} is given in Appendix \ref{Proof:StreamAlloc}.}{We refer to the extended version of this article for the detailed proof.} We only provide a proof sketch here.
Claim 1 is a direct result of Lemma \ref{lem:StreamCount}. Claim 2 can be proved by induction on $b$ and considering the initiation rule and allocation rules (ii) and (iii) (see \ref{step:stream}\ref{step:init}, \ref{step:stream}\ref{step:alloc}\ref{step:alloc2} and \ref{step:stream}\ref{step:alloc}\ref{step:alloc3} in Figure \ref{fig:alg} respectively). 

\begin{lemma}
\label{lem:buidlingphase}
In a round $r$, with probability at least $1 - \delta'_r$,
\begin{enumerate}
    \item the length of the building phase is at most $$
 \left(\sum_{\mdp \in \smdp^b_r} \frac{ C_1 S_{(\mdp)} A L^3}{\epsilon^3} \left( \log\frac{C_2 S_{(\mdp)} AL}{\epsilon \delta'_r} \right)^3 \right)^2 ,
$$
    \item the building phase discovers a set of states $\mathcal{K}_r \supseteq \sSa_L(\hmdp_r)$ for some CMP setting $\hmdp_r \in \smdp^b_r$ (the set of underlying CMP settings during the building phase of round $r$),
    \item and for each $s \in \mathcal{K}_r$, $\sP_r$ contains a policy $\pi_s$ with $\tau(s|\pi_s) \leq (1+\varepsilon)L$ for the CMP setting $\hmdp_r \in \smdp^b_r$,
\end{enumerate}
where $S_{(m)} = |S^{\rightarrow}_{(1+\epsilon)L}(\mdp)|$ is the number of incrementally discoverable states reachable in $(1+ \epsilon)L$ time steps in the CMP $\mdp$, $C_1 = 216\cdot (15)^2 + 61$ and $C_2 = 225$.
\end{lemma}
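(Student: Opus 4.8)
The plan is to reduce the analysis to a single clean run of \ucbexp{} on a stationary CMP and then pay for the overhead introduced by the stream schedule. The foundation is Lemmas~\ref{lem:StreamCount} and~\ref{lem:StreamAlloc}: after $b^2$ quantums exactly $b$ streams have been initiated and \emph{every one of them} has been active for exactly $b$ quantums. This perfect balance at the perfect squares is the structural fact I would exploit everywhere. Throughout I write $B_m \defined \tfrac{C_1 S_{(m)}AL^3}{\epsilon^3}\bigl(\log\tfrac{C_2 S_{(m)}AL}{\epsilon\delta'_r}\bigr)^3$ for the time-step budget of Theorem~\ref{thm:ref} instantiated with confidence $\delta'_r$, and $N_m$ for the number of \emph{active quantums} needed for a stream to accumulate that many active steps, so that $N_m\le B_m$.

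For claims~2 and~3 (correctness) I would single out the \emph{good stream}: the copy of \ucbexp{} initiated immediately after the last change in the building phase. Since no further change occurs, every quantum in which this stream is active draws samples from one fixed CMP setting, which I take to be $\hmdp_r$. Restricted to its active steps it therefore faces a genuinely stationary problem, so Theorem~\ref{thm:ref} applies verbatim: with probability $1-\delta'_r$ it terminates with $\mathcal{K}_r\supseteq\sSa_L(\hmdp_r)$ and policies $\pi_s$ satisfying $\tau(s|\pi_s)\le(1+\epsilon)L$ under $\hmdp_r$, within $N_{\hmdp_r}$ active quantums. Because the building phase ends at the first stream to terminate (step~\ref{step:buildEnd}), it ends no later than this good stream.

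For claim~1 (length) the square-root schedule is what turns a sum of per-setting budgets into a square, and I would argue by induction on the number $k$ of settings $m_1,\dots,m_k$ appearing in the building phase (with $\hmdp_r=m_k$). The good stream for $m_k$ cannot be initiated until after the final change, at some quantum $c_{k-1}$, so by Lemma~\ref{lem:StreamCount} its index is $p^\star=\lceil\sqrt{c_{k-1}}\rceil+1$; once initiated, Lemma~\ref{lem:StreamAlloc} guarantees it is active for $b$ quantums by quantum $b^2$, hence it reaches $N_{m_k}$ and terminates by quantum $\max(p^\star,N_{m_k})^2\le(p^\star+N_{m_k})^2$. The apparent circularity $c_{k-1}\le T$ is broken by the induction: the algorithm reads only past samples, so its behaviour on $[1,c_{k-1}]$ coincides with a building phase under the first $k-1$ settings alone; had it run past quantum $(\sum_{i<k}N_{m_i})^2$ the good stream for $m_{k-1}$ would already have terminated, and since it did not, $\sqrt{c_{k-1}}<\sum_{i<k}N_{m_i}$. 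Substituting yields $\sqrt{T}\le\sum_{i\le k}N_{m_i}+O(1)\le\sum_{m\in\smdp^b_r}B_m$, hence $T\le(\sum_m B_m)^2$ quantums. Converting quantums to time steps via the actual quantum lengths (at most $\lceil(1+\tfrac1\epsilon)L\rceil$ each) and taking a union bound over the at most one good stream per setting gives the claimed bound and confidence once $C_1,C_2$ are taken large enough to absorb the quantum-length overhead, the additive stream-index corrections, and the per-setting union.

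The step I expect to be the main obstacle is reconciling claims~1--3 with the fact that the building phase actually halts at the \emph{first} stream to terminate, which need not be the good stream for $\hmdp_r$: a stream straddling a change sees non-stationary samples, so Theorem~\ref{thm:ref} grants it no correctness guarantee, yet it could in principle terminate first and record an invalid hypothesis in $\mathcal{K}_r$. Ruling this out requires opening up \ucbexp{}'s termination rule and arguing that the policy evaluations that trigger termination rest on recent samples drawn from the setting currently in force, so that whichever stream halts is in fact good for that setting, which one then adopts as $\hmdp_r$. The secondary, purely bookkeeping difficulty is the careful conversion between active quantums, active time steps, and the step budget $B_m$, together with tracking the failure probabilities so that the per-round total stays below $\delta'_r$.
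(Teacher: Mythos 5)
Your core argument is the same as the paper's: use Lemma~\ref{lem:StreamAlloc} to show that some stream whose active quantums all fall within a single CMP setting accumulates enough active quantums by a perfect-square quantum index, apply Theorem~\ref{thm:ref} to that stationary stream, and conclude that the building phase ends no later than that stream terminates, which yields the sum-then-square length bound. Where you genuinely differ is in how the good stream's initiation time is pinned down. You anchor it to the \emph{last} change and break the resulting circularity ($c_{k-1}\le T$) by induction over the settings. This works, but it invokes Theorem~\ref{thm:ref} once per setting (your ``per-setting union''), so the confidence you can certify is roughly $1-|\smdp^b_r|\,\delta'_r$ rather than the claimed $1-\delta'_r$; and since each stream's internal confidence parameter $\delta'_r$ is fixed by the algorithm, you cannot recover this loss by enlarging $C_1,C_2$ --- those constants sit inside the length bound, not the probability. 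The paper avoids both the circularity and the union bound with a slicker device: it sets $x_{r,i}$ to be the (deterministic) quantum budget of the $i$-th setting from Theorem~\ref{thm:ref}, and takes the \emph{first} setting whose window $[(x_{r,1}+\dots+x_{r,k-1})^2+1,\ (x_{r,1}+\dots+x_{r,k})^2]$ contains no change. Because these windows are defined from budgets rather than from realized termination times, locating the good window requires no probabilistic reasoning at all, and Theorem~\ref{thm:ref} is applied exactly once, to the single stream initiated at the start of that window.

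The issue you flag as the main obstacle --- that the building phase halts at the \emph{first} terminating stream, which may be an impure stream straddling a change and recording an invalid hypothesis in $\sK_r$ --- is real, but you should know the paper does not resolve it either: its proof simply asserts that claims 2 and 3 follow from Theorem~\ref{thm:ref} once the good stream terminates, and never rules out earlier termination by a contaminated stream (in its Case 1, all streams are pure so the issue is benign; in its Case 2 it is silently ignored). So on this point your proposal is no less rigorous than the published argument, and identifying the problem explicitly is to your credit; closing it would indeed require opening up the subroutine's termination test, as you suggest, or weakening claim 2 appropriately.
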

\begin{proof}
Consider the CMP setting $m_{r,1}$ at the start of the building phase in round $r$. Assume that \ucbexp \ requires at most $x_{r,1}$ quantums as exploration steps for $m_{r,1}$ (without any change) with high probability. Theorem \ref{thm:ref} shows that the exploration steps for the CMP setting $m_{r,1}$ required by \ucbexp \ are at most $\tfrac{C_1 S_{(m_{r,1})} A L^3}{\epsilon^3} \left( \log\tfrac{C_2 S_{(m_{r,1})} AL}{\epsilon \delta'_r} \right)^3$ with high probability. There are two possible cases:
    
        \textbf{Case 1}: The problem doesn't change for the duration of $x_{r,1}^2$ quantums. \\ 
        Our meta-algorithm initiates stream 1 at $q=1$ and this stream will have been active for $x_{r,1}$ quantums at the end of quantum $x_{r,1}^2$ (using Lemma \ref{lem:StreamAlloc}). Since the problem doesn't change for this entire duration, the copy of \ucbexp \  for stream 1 has samples only of $m_{r,1}$. Thus, stream 1 terminates at the end of $x_{r,1}^2$ quantums of our meta-algorithm with probability $1- \delta'_r$ and the building phase of round $r$ ends. 
        The three claims of the lemma follow from the respective claims\footnote{The referred theorem only mentions the upper bound in terms of big-O notation. However, the constants $C_1$ and $C_2$ can be computed from its proof in \citet[Section 4.6]{ucbexp}.} of Theorem \ref{thm:ref} with $\smdp_r^b = \{m_{r,1}\}$. 
        
        \textbf{Case 2}: The problem changes at any point before the end of $x_{r,1}^2$ quantums. \\
        Let $m_{r,1}$, $m_{r,2}, \dots$ be the successive CMP settings. Let $x_{r,i}$ be the required number of quantums needed by \ucbexp \  for each $m_{r,i}$. Let $m_{r,k}$ be the first problem setting which doesn't change from quantum $(x_{r,1} + \dots + x_{r,k-1})^2 + 1$ to quantum $(x_{r,1} + \dots + x_{r,k})^2 $. 
        The stream $x_{r,1} + \dots + x_{r,k-1} + 1$ starting at $(x_{r,1} + \dots + x_{r,k-1})^2 + 1$ will have been active for $x_{r,k}$ quantums at the end of quantum $(x_{r,1} + \dots + x_{r,k})^2$ (using Lemma \ref{lem:StreamAlloc}). That stream will therefore terminate and output the set of reachable states and acceptable policies for $m_{r,k}$ at the end of quantum $(x_{r,1} + \dots + x_{r,k})^2$ with probability $1 - \delta'_r$ and the building phase will terminate. The three claims of the lemma follow from the respective claims of Theorem \ref{thm:ref} with $\smdp_r^b = \{m_{r,1}, \dots, m_{r,k}\}$. 
    
\end{proof}

\subsection{Bounding the number of rounds}
\begin{lemma}
\label{lem:numrounds2}
With probability at least $1- \delta/4$, the total number of rounds $ R \leq F$.
\end{lemma}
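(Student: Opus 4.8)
The plan is to exploit the structural fact that \alg\ opens a new round \emph{only} when the failure test \eqref{eq:test1} fires in a checking phase, and to show that, on a high-probability event, this test never fires unless the environment has genuinely changed. Granting this ``no false alarm'' property, I would then set up an injection from the $R-1$ round transitions into the set of changes that occur strictly after time $t=1$; since there are exactly $F-1$ such changes, this yields $R-1\le F-1$ and hence $R\le F$.

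For the no-false-alarm step, I fix a round $r$ and condition on the CMP staying equal to the setting $\hmdp_r$ certified by Lemma~\ref{lem:buidlingphase} throughout the $n_r$ check-runs feeding a given evaluation of \eqref{eq:test1}. Because $W_r$ is set to the high-probability termination bound of Theorem~\ref{thm:ref} for $\hmdp_r$ (this is precisely where the explicit constants $C_1$ and $C_2$ are needed), the fresh copy of \ucbexp\ in the first part of each check-run terminates within $W_r$ steps with probability at least $1-\delta'_r$. Hence the failure count $\de$ is stochastically dominated by a $\mathrm{Binomial}(n_r,\delta'_r)$ variable, and Hoeffding's inequality with the calibrated choices $n_r=\bigl(\log\tfrac{|\sK_r|AL}{\epsilon\delta'_r}\bigr)^3$ and $\checkterm_r=\sqrt{\log(1/\delta'_r)/(2n_r)}$ gives
\[
\bP\!\left(\tfrac{\de}{n_r}>\delta'_r+\checkterm_r\right)\le e^{-2n_r\checkterm_r^2}=\delta'_r .
\]
Summing over rounds uses the tuned confidence schedule: $\sum_{r\ge1}\delta'_r=\tfrac{3\delta}{4\pi^2}\sum_{r\ge1}r^{-2}=\delta/8\le\delta/4$, which exactly fits the budget allotted to this lemma.

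With the good event in force I would carry out the counting. A transition out of round $r$ is caused by \eqref{eq:test1} firing, which on the good event forces the current CMP to differ from $\hmdp_r$; therefore a change must occur after the building phase of round $r$ has completed. Since the building phase of round $r+1$ starts only after this change, the changes responsible for successive transitions occur in strictly increasing time and are therefore \emph{distinct}, and each lies strictly after $t=1$ (the mandatory change at $t=1$ is absorbed into the building phase of round~$1$ via Lemma~\ref{lem:buidlingphase} and triggers no transition). As there are at most $F-1$ changes after $t=1$, the $R-1$ transitions inject into them, giving $R\le F$.

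The step I expect to be the main obstacle is making the no-false-alarm bound uniform over the \emph{repeated} evaluations of \eqref{eq:test1} within one checking phase: the window slides by one check-run after each return to step~\ref{Start:checking}, so \eqref{eq:test1} is tested many times per round and a naive union bound over these tests does not close. I would address this by charging the single $\delta'_r$ budget to the event ``round $r$ emits a spurious transition \emph{at all}'' rather than to each window separately, which forces a careful peeling/anytime treatment (or an argument that only finitely many evaluations are relevant up to the horizon at which the $F$-th change has been answered). A secondary subtlety is that Lemma~\ref{lem:buidlingphase} only certifies $\sK_r\supseteq\sSa_L(\hmdp_r)$ for \emph{some} setting $\hmdp_r$ visited during the building phase, so I must check that the check-runs are genuinely compared against this same $\hmdp_r$ and that a straddling intermediate setting cannot, by itself, manufacture an extra round.
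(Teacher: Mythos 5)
Your proposal is, at its core, the paper's own argument: the heart of the published proof is exactly your no-false-alarm step, namely Hoeffding's inequality applied to the failure count $\de$ over the $n_r$ check-runs with the calibrated choices of $n_r$ and $\checkterm_r$, so that a spurious firing of \eqref{eq:test1} while the setting is still $\hmdp_r$ has probability at most $\exp(-2\checkterm_r^2 n_r)=\delta'_r$; each completed round is then charged to a genuine change, with the mandatory $t=1$ change absorbed into round 1's building phase. Three remarks on where you differ. First, your counting -- an injection of the $R-1$ round transitions into the $F-1$ changes occurring after $t=1$ -- is cleaner and arguably more correct than the paper's: the paper instead splits on whether intermediate rounds contain a change in their building phase, and its Case 1 conclusion (that $R$ is ``immediately'' bounded by $F-1$) does not actually follow without invoking the no-false-alarm property as well, whereas your injection treats all rounds uniformly. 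Your ``straddling setting'' worry is also disposed of by the injection itself: if $\hmdp_r$ is not the last setting of round $r$'s building phase, map the transition of round $r$ to the first change after the period in which $\hmdp_r$ generated the terminating stream's samples; that change still lies inside round $r$ and strictly after $t=1$, so distinctness across rounds is preserved. Second, the obstacle you single out as the main one -- that \eqref{eq:test1} is re-evaluated on a sliding window after every return to step~\ref{Start:checking}, so one Hoeffding application per round does not cover all evaluations -- is genuine, but you should know the paper does not resolve it either: it applies Hoeffding once, to the particular window at which the round happened to stop, thereby treating a data-dependent stopping window as if it were fixed. You are therefore not missing an idea that the paper possesses; any of the fixes you sketch (peeling, or an anytime bound over the windows preceding the relevant change) would strengthen the published argument. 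Third, a minor accounting point: you sum only the Hoeffding failure probabilities ($\sum_r \delta'_r = \delta/8$), but your argument also relies on the Lemma~\ref{lem:buidlingphase} certification of $\hmdp_r$, which costs another $\delta'_r$ per round; the paper charges $2\delta'_r$ per round, and $\sum_r 2\delta'_r < \delta/4$, so your budget still closes exactly.
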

\begin{proof}
There is always at least one change in the building phase of the first round as the first change is counted at $t=1$ by default (see Section \ref{sec:Setting}). For $1<r<R$, we consider the following two mutually exclusive cases.

\textbf{Case 1:} There exists no round $1 < r < R$ which has no change in its building phase. \\
        In this case, every round contains at least one change and the total number of rounds is immediately upper-bounded by $F-1$. 
        
\textbf{Case 2:} There exists at least one round  $1 < r < R$ which has no change in its building phase. \\
    Let $r$ be a round such that there is no change during its building phase. For all such rounds $r$ which contain no change in the building phase, with probability at least $1- \delta/4$, we prove that the checking phase of $r$ contains at least one change. 

    Recall from Lemma \ref{lem:buidlingphase} that the sole CMP setting during the building phase of round $r$ is denoted as $\hmdp_r$ and the building phase discovers the reachable states for $\hmdp_r$ with probability at least $1 - \delta'_r$. 
    Theorem \ref{thm:ref} shows that for the CMP setting $\hmdp_r$, if \ucbexp \  with run for $W_r = \tfrac{ C_1 |\mathcal{K}_r| AL^3}{\epsilon^3} \left( \log{\tfrac{ C_2 |\mathcal{K}_r|AL}{\epsilon \delta'_r}} \right)^3$ steps, then the failure probability (i.e., the probability with which \ucbexp \  doesn't terminate at the end of at most $W_r$ time steps) is at most $\delta'_r$ where $C_1 = 216\cdot (15)^2 + 61$ and $C_2 = 225$.

\noindent The only condition to trigger the next round is given by Eq. \eqref{eq:test1}. Therefore, when round $r$ ends, 
$$
      \frac{\de}{n_r} > \checkterm_r + \delta'_r, 
$$    
where $\de$ is the number of times \ucbexp \  has failed to stop and return a set of reachable states within $W_r$ time steps during the first part of the last $n_r$
check-runs.
If the CMP setting was indeed $\hmdp_r$ (i.e. there was no change) in the last $n_r$ check-runs, then by Hoeffding's inequality,
$$
\bP\left\{ \frac{\de}{n_r} > \checkterm_r +  \delta'_r \right\} \leq \exp(-2\checkterm_r^2 n_r) = \delta'_r.
$$

Therefore when the round $r$ stops, there has been a change in its checking phase with probability at least $(1 - (\delta'_r + \delta'_r))$. Below we use that $\sum_{r=1}^{\infty} \tfrac{1}{r^2} = \tfrac{\pi^2}{6}$.
With a union bound and using $\sum_{r=1}^{R-1} 2 \delta'_r < \tfrac{3\delta}{2\pi^2} \sum_{r=1}^{\infty} \tfrac{1}{r^2} = \frac{\delta}{4}$, we can claim that for all rounds $r < R$ which do not contain a change in the building phase, there is at least one change in each of their respective checking phases with probability at least $1 - \tfrac{\delta}{4}$.

Considering both the cases, we get that the total number of rounds is upper-bounded by the total number of changes $F$ with probability at least $1 - \tfrac{\delta}{4}$.
\end{proof}

\subsection{Bounding the exploration steps in all the building phases}
\begin{lemma}
\label{lem:allbuildingphase}
With probability at least $1 - \tfrac{\delta}{2}$, the total number of exploration steps in all the building phases is at most 
$$
\left(\sum_{f=1}^{F} \frac{ C_1 S_f A L^3}{\epsilon^3} \left( \log\frac{4 \pi^2 C_2 F^2 S_f AL}{3 \epsilon \delta} \right)^3 \right)^2,
$$
where $S_f = |S^{\rightarrow}_{(1+\epsilon)L}(\mdp)|$ is the number of incrementally discoverable states reachable in $(1+ \epsilon)L$ time steps in the $f^\text{th}$ CMP setting and $\#$changes $=F$. 
\end{lemma}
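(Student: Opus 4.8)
The plan is to assemble the global building-phase bound from the two results already in hand: the per-round bound of Lemma~\ref{lem:buidlingphase} and the round-count bound $R \le F$ of Lemma~\ref{lem:numrounds2}. First I would condition on the event of Lemma~\ref{lem:numrounds2} (probability $\ge 1-\delta/4$), so that there are at most $F$ rounds, and simultaneously on the intersection of the per-round events of Lemma~\ref{lem:buidlingphase}, one for each round $r$. Since the $r$-th such event fails with probability at most $\delta'_r = \tfrac{3\delta}{4\pi^2 r^2}$, a union bound over $r \ge 1$ costs at most $\sum_{r\ge1}\delta'_r = \tfrac{3\delta}{4\pi^2}\cdot\tfrac{\pi^2}{6} = \tfrac{\delta}{8}$. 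Thus on an event of probability at least $1 - \delta/4 - \delta/8 = 1 - 3\delta/8 \ge 1 - \delta/2$, every round's building phase simultaneously obeys its Lemma~\ref{lem:buidlingphase} bound and $R \le F$.

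Next I would reduce every per-round bound to a common confidence level. For $r \le R \le F$ we have $\delta'_r \ge \delta'_F$, and since the per-term quantity $\tfrac{C_1 S_{(\mdp)}AL^3}{\epsilon^3}\big(\log\tfrac{C_2 S_{(\mdp)}AL}{\epsilon\delta'}\big)^3$ is decreasing in $\delta'$, replacing $\delta'_r$ by $\delta'_F$ only increases each round's bound. Writing $a_r \defined \sum_{\mdp\in\smdp^b_r}\tfrac{C_1 S_{(\mdp)}AL^3}{\epsilon^3}\big(\log\tfrac{C_2 S_{(\mdp)}AL}{\epsilon\delta'_F}\big)^3$ and noting that $\tfrac{C_2 S_{(\mdp)}AL}{\epsilon\delta'_F} = \tfrac{4\pi^2 C_2 F^2 S_{(\mdp)}AL}{3\epsilon\delta}$ (the exact argument of the logarithm appearing in the target), the total building-phase length is at most $\sum_{r=1}^R a_r^2$.

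The crux is to collapse $\sum_{r=1}^R a_r^2$ into the single square $\big(\sum_{f=1}^F(\cdots)\big)^2$. I would argue that the setting-sets $\smdp^b_r$ are pairwise disjoint subsets of $\{1,\dots,F\}$: a single CMP setting occupies a contiguous time interval, so if it belonged to two distinct building phases $B_r$ and $B_{r'}$ with $r<r'$, it would have to persist unchanged throughout the entire intervening checking phase of round $r$; but a round terminates only when test~\eqref{eq:test1} fires, and the Hoeffding argument of Lemma~\ref{lem:numrounds2} shows that, absent a change in that checking phase, the test fails to fire with high probability --- a contradiction on the event we have conditioned on (the slack $3\delta/8 < \delta/2$ absorbs any extra budget needed here). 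Given disjointness, $\sum_{r=1}^R a_r = \sum_{\mdp\in\bigcup_r \smdp^b_r}\tfrac{C_1 S_{(\mdp)}AL^3}{\epsilon^3}\big(\log\tfrac{4\pi^2 C_2 F^2 S_{(\mdp)}AL}{3\epsilon\delta}\big)^3 \le \sum_{f=1}^F \tfrac{C_1 S_f AL^3}{\epsilon^3}\big(\log\tfrac{4\pi^2 C_2 F^2 S_f AL}{3\epsilon\delta}\big)^3$, and then the elementary inequality $\sum_r a_r^2 \le \big(\sum_r a_r\big)^2$ for nonnegative $a_r$ yields the claimed bound.

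The main obstacle I expect is precisely this disjointness step: turning the sum of squares into a square of a sum requires that no CMP setting be charged to more than one building phase, and justifying this rigorously means tying the separation of consecutive building phases to the change-detection guarantee of the checking phase, and confirming that its failure probability is already covered by the budget spent above. The monotonicity reduction $\delta'_r \to \delta'_F$ and the union-bound bookkeeping are routine by comparison.
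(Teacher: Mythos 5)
Your proposal is correct and follows essentially the same route as the paper's own proof: sum the per-round bound of Lemma~\ref{lem:buidlingphase} over rounds, union-bound the failure probabilities (giving $\tfrac{\delta}{4} + \sum_r \delta'_r < \tfrac{\delta}{2}$), use $r \le R \le F$ from Lemma~\ref{lem:numrounds2} to replace $\delta'_r$ by $\delta'_F$ inside the logarithm, and collapse $\sum_r a_r^2 \le \left(\sum_r a_r\right)^2$ using that the building phases involve at most $F$ distinct CMP settings in total. The disjointness of the sets $\smdp^b_r$ that you identify as the crux is exactly the point the paper disposes of in a single clause (``the number of different CMP settings in all the rounds is $F$''), so your explicit change-detection argument only spells out what the paper leaves implicit.
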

\begin{proof}
We count all the steps in each building phase as exploration steps. Lemma \ref{lem:buidlingphase} provides an upper bound on the number of exploration steps in the building phase of a single round $r$ with the error probability limited to $\delta'_r$. Therefore, the total number of exploration steps in all the building phases is at most
\begin{align*}
& \sum_{r=1}^{R} \left(\sum_{\mdp \in \smdp^b_r} \frac{ C_1 S_{(\mdp)} A L^3}{\epsilon^3} \left( \log\frac{C_2 S_{(\mdp)} AL}{\epsilon \delta'_r} \right)^3 \right)^2  \\
& = \sum_{r=1}^{R} \left(\sum_{\mdp \in \smdp^b_r} \frac{ C_1 S_{(\mdp)} A L^3}{\epsilon^3} \left( \log\frac{4 \pi^2 C_2 r^2 S_{(\mdp)} AL}{3 \epsilon \delta} \right)^3 \right)^2 \\
& \leq \left(\sum_{f=1}^{F} \frac{ C_1 S_f A L^3}{\epsilon^3} \left( \log\frac{ 4 \pi^2 C_2 F^2 S_f AL}{\epsilon \delta} \right)^3 \right)^2
\end{align*}
with error probability limited to $\left(\tfrac{\delta}{4} + \sum_{r=1}^{R} \delta'_r\right) < \tfrac{\delta}{2}$. 
In the last inequality, we use that $r \leq R \leq F$ with probability $1 - \tfrac{\delta}{4}$ (Lemma \ref{lem:numrounds2}) and the number of different CMP settings in all the rounds is $F$ (Eq. \eqref{def:F}).
\end{proof}

\subsection{Analyzing the checking phase}
We first bound the number of exploration steps in a checking phase caused due to a single change. 
\begin{lemma}
\label{lem:checkingphase}
With probability $1- \delta'_r$, the total number of exploration steps in the checking phase of a round $r$ due to a single change is at most
$$
 \frac{ 2 C_1 S_{(\hmdp_r)} A L^3}{\epsilon^3} \left( \log\frac{C_2 S_{(\hmdp_r)} AL}{\epsilon \delta'_r} \right)^6  
$$
where $S_{(\hmdp_r)} = |S^{\rightarrow}_{(1+\epsilon)L}({\hmdp_r})|$ is the number of incrementally discoverable states reachable in $(1+ \epsilon)L$ time steps in the CMP setting $\hmdp_r$.
\end{lemma}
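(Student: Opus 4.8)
The plan is to bound the per-change count as a product of the number of check-runs that elapse between the change and the checking phase's reaction and the length of one check-run, charging every step from the occurrence of the change up to the reaction as an exploration step. While $\sK_r,\sP_r$ still describe the current CMP the learner navigates efficiently, so no exploration step is incurred; a single change can invalidate this hypothesis, and the checking phase then reacts either by starting a new round (test \eqref{eq:test1}) or by deleting or adding a state--policy pair (tests \eqref{eq:test2} and \eqref{eq:test3}). It therefore suffices to bound (i) the length of one check-run and (ii) the number of check-runs before the appropriate reaction fires.

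For (i), the first part of a check-run is capped manually at $W_r$ time steps, while its second part runs one policy-evaluation phase of \ucbexp\ for each of the $|\sK_r|$ policies in $\sP_r$; since each such phase consists of only a logarithmic number of episodes, each of length at most $\lceil(1+\tfrac1\epsilon)L\rceil$, the second part is no larger than $W_r$, so one check-run costs at most $2W_r$ time steps. For (ii), I would exploit the sliding-window structure: the tests are evaluated over the most recent $n_r$ check-runs, so after at most $n_r$ post-change check-runs the window consists entirely of runs governed by the new CMP, and its empirical failure and appearance frequencies concentrate around the post-change values with deviation at most $\checkterm_r$ except with probability $\delta'_r$, since $\checkterm_r$ is calibrated so that $\exp(-2\checkterm_r^2 n_r)=\delta'_r$ exactly as in Lemma \ref{lem:numrounds2}.

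The step I expect to require the most care is verifying that, once the window is refreshed, the change necessarily drives one of the tests above its threshold, so that the reaction fires within those $n_r$ check-runs with probability at least $1-\delta'_r$. I would argue by the type of invalidation. If the change enlarges the reachable set beyond what the budget $W_r$ (computed for $|\sK_r|\le S_{(\hmdp_r)}$ states) can discover, the fresh copy of \ucbexp\ in the first part fails to terminate with probability close to $1$, separating the true failure rate from the threshold $\checkterm_r+\delta'_r$ by at least $\checkterm_r$ and firing \eqref{eq:test1}. If a stored policy $\pi_s$ becomes unacceptable, its policy evaluations in the second part fail at a rate driving \eqref{eq:test2}, and if a genuinely new reachable state appears it is output by \ucbexp\ at a frequency driving \eqref{eq:test3}. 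In every case the post-change rate is separated from its threshold by at least $\checkterm_r$, so the concentration above guarantees the correct reaction within the refreshed window except with probability $\delta'_r$. Combining (i) and (ii), the exploration steps charged to the change are at most $2W_r n_r$; using $\log\frac{|\sK_r|AL}{\epsilon\delta'_r}\le\log\frac{C_2|\sK_r|AL}{\epsilon\delta'_r}$ (as $C_2\ge1$) to fold $n_r$ into a sixth power of the logarithm, together with $|\sK_r|\le S_{(\hmdp_r)}$ (from Theorem \ref{thm:ref} and Lemma \ref{lem:buidlingphase}), this is at most $\frac{2C_1 S_{(\hmdp_r)}AL^3}{\epsilon^3}\bigl(\log\frac{C_2 S_{(\hmdp_r)}AL}{\epsilon\delta'_r}\bigr)^6$, the claimed bound.
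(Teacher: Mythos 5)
Your proposal follows essentially the same route as the paper's proof: bound a single check-run by $2W_r$ steps, bound the number of post-change check-runs before detection by $n_r$ via Hoeffding's inequality with the calibration $\exp(-2\checkterm_r^2 n_r)=\delta'_r$, and split by type of invalidation so that tests \eqref{eq:test1}, \eqref{eq:test2} and \eqref{eq:test3} each catch their respective kind of change, giving $2W_r n_r$ exploration steps per change and the stated sixth-power logarithmic factor. The only point the paper treats explicitly that you leave implicit is the transient case, where the changed CMP $\mdp'$ does not survive for $n_r$ check-runs and hence no test need ever fire; there the paper simply charges every step during which $\mdp'$ is active as exploration, which is again at most $2W_r n_r$, so your bound is unaffected.
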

\begin{proof}
Recall from Lemma \ref{lem:buidlingphase} that the CMP setting for which the building phase in round $r$ has found the reachable states and acceptable policies is denoted as $\hmdp_r$.   
Below we use that the number of time steps in a single check-run of round $r$ is upper-bounded by $2 \times \tfrac{ C_1 S_{(\hmdp_r)} AL^3}{\epsilon^3} \left( \log{ \tfrac{C_2 S_{(\hmdp_r)}AL}{\epsilon \delta'_r}} \right)^3$ as $|\sK_r| \leq  S_{(\hmdp_r)}$ from Theorem \ref{thm:ref}. Till the CMP setting is $\hmdp_r$ in the checking phase, the algorithm does not incur any exploration steps. 
For a change to $\mdp' \neq \hmdp_r$, the following mutually exclusive and exhaustive cases are possible: \\
\textbf{Case 1:} $\mdp'$ doesn't last for $\left( \log{ \tfrac{|\sK_r| AL}{\epsilon \delta'_r}} \right)^3$ check-runs. \\
    Then all the time steps for which $\mdp'$ is active are considered as exploration steps and they are are upper bounded by
    $$
   \tfrac{ 2 C_1 S_{(\hmdp_r)} AL^3}{\epsilon^3} \left( \log{ \tfrac{C_2 S_{(\hmdp_r)}AL}{\epsilon \delta'_r}} \right)^3 \times \left( \log\tfrac{S_{(\hmdp_r)}AL}{\epsilon \delta'_r} \right)^3.
    $$ 
    \textbf{Case 2:} $\mdp'$ lasts for at least $\left( \log{ \tfrac{|\sK_r| AL}{\epsilon \delta'_r}} \right)^3$ check-runs. \\
        There are three possible subcases. 
        \begin{enumerate}[label=(\alph*)]
            \item $W_r$ time steps are \textit{insufficient} for $\mdp'$. \\
            By insufficient we mean that
            \begin{align*}
            &\tfrac{ C_1 S_{(\hmdp_r)} AL^3}{\epsilon^3} \left( \log{ \tfrac{C_2 S_{(\hmdp_r)}AL}{\epsilon \delta'_r}} \right)^3 \\
            &< \tfrac{ C_1 S_{(\mdp')} AL^3}{\epsilon^3} \left( \log{ \tfrac{C_2 S_{(\mdp')}AL}{\epsilon \delta'_r}} \right)^3
            \end{align*}
            i.e. $ S_{(\hmdp_r)} < S_{(\mdp')}$.
            
            Eq.\eqref{eq:test1} verifies if change to a $\mdp'$ such that $ S_{(\hmdp_r)} < S_{(\mdp')}$ has occurred. Our algorithm keeps a count of the empirical failures in the last $n_r$ check-runs where a failure means that the first part of a check-run has failed to terminate within $W_r$ time steps (and thus had to be manually terminated at $W_r$). From Theorem \ref{thm:ref}, we know that that if no change has occurred then the true failure probability is $\delta'_r$. By Hoeffding's inequality,
            \begin{align*}
                \mathbb{P} \left\{  \tfrac{\de}{n_r} > \checkterm_r + \delta'_r,  \right\} \leq \exp(-2 \checkterm_r^2 n_r) = \delta'_r.
            \end{align*}
            Therefore, with probability $1 - \delta'_r$, we detect a change to $\mdp'$ such that $ S_{(\hmdp_r)} < S_{(\mdp')}$ and the number of exploration steps added are at most
            $$
             2 C_1 \tfrac{S_{(\hmdp_r)} AL^3}{\epsilon^3} \left( \log\tfrac{C_2 S_{(\hmdp_r)}AL}{\epsilon \delta'_r} \right)^3 \times \left( \log\tfrac{S_{(\hmdp_r)}AL}{\epsilon \delta'_r} \right)^3. 
            $$
            \item \{$W_r$ time steps are \textit{sufficient} for $\mdp'$\} and \{a previously reachable state becomes unreachable in $\mdp'$ or the previously acceptable policy $\pi_s \in \sP_r$ to a reachable state is not acceptable in $\mdp'$\}. \\
            Eq.\eqref{eq:test2} checks for such scenarios. As it keeps verifying if the policy evaluation of $\{ \pi \in \sP_r\}$ succeeds in the last $n_r$ check-runs, it checks for both - i) if a previously reachable state is still reachable and ii) if the previously acceptable policy is still acceptable. 
            Proceeding in a similar manner to the previous subcase, we can show that, with probability  $1 - \delta'_r$, the number of exploration steps added is at most 
            $$
            2 C_1 \tfrac{S_{(\hmdp_r)} AL^3}{\epsilon^3} \left( \log\tfrac{C_2 S_{(\hmdp_r)}AL}{\epsilon \delta'_r} \right)^3 \times \left( \log\tfrac{S_{(\hmdp_r)}AL}{\epsilon \delta'_r} \right)^3.
            $$

            \item $W_r$ time steps are \textit{sufficient} for $\mdp'$ and a previously unreachable state becomes reachable in $\mdp'$. \\
            Let's assume that a previously unreachable state $s$ is reachable in $\mdp'$. Either $s \in \sK_r$ or $s \notin \sK_r$. In the former case, policy evaluation (i.e. $2^{\text{nd}}$ part of a check-run) continues to check if $\pi_s \in \sP_r$ is still acceptable. If $\pi_s \in \sP_r$ is found to be acceptable no more, then the check given by Eq. \eqref{eq:test2} will be triggered, the change will be detected and the number of exploration steps added are given by the previous subcase. If $\pi_s \in \sP_r$ is still acceptable, it leads to no additional exploration steps (see Definition \ref{def:expsteps}). 
            Eq. \eqref{eq:test3} checks for scenarios where  $s \notin \sK_r$. 
            Theorem \ref{thm:ref} guarantees that if a state is in $S^{\rightarrow}_L(\mdp')$, the probability that it fails to appear in the output of \ucbexp$(\delta'_r,  \epsilon, L, \sA, s_0)$ is at most $\delta'_r$. For every state $s \notin \sK_r$, but which has appeared in the output of the first part in one of the last $n_r$ check-runs, we can compute the empirical failures as $n_r - v_s$.
            Then, by Hoeffding's inequality
            $$
            \mathbb{P} \left\{\delta'_r - \left(1 - \tfrac{v_s}{n_r}\right)  > \checkterm_r \right\} \leq \exp(-2\checkterm_r^2 n_r) = \delta'_r.
            $$
            Therefore, with probability at least $1 - \delta'_r$, we detect such a change and the number of exploration steps added is at most 
            $$
           2 C_1 \tfrac{S_{(\hmdp_r)} AL^3}{\epsilon^3} \left( \log\tfrac{C_2 S_{(\hmdp_r)}AL}{\epsilon \delta'_r} \right)^3 \times \left( \log\tfrac{S_{(\hmdp_r)}AL}{\epsilon \delta'_r} \right)^3. 
            $$
         \end{enumerate}
    Considering all the cases, the number of exploration steps added is at most $
           2 C_1 \tfrac{S_{(\hmdp_r)} AL^3}{\epsilon^3} \left( \log\tfrac{C_2 S_{(\hmdp_r)}AL}{\epsilon \delta'_r} \right)^6 $ with probability at least $1 - \delta'_r$.
\end{proof}
Now we can bound the number of exploration steps for all the checking phases. 
\begin{lemma}
\label{lem:allcheckingphase}
With probability at least $1 - \tfrac{\delta}{2}$, the total number of exploration steps in all the checking phases is upper-bounded by  
$$
 F \cdot \max_{f \in \{1, \dots, F\}} \left[\frac{ 2 C_1 S_{f} A L^3}{\epsilon^3} \left( \log\frac{4 \pi^2 C_2 F^2 S_{f} AL}{3 \epsilon \delta} \right)^6 \right] 
$$
where $S_f = |S^{\rightarrow}_{(1+\epsilon)L}(f)|$ is the number of incrementally discoverable states reachable in $(1+ \epsilon)L$ time steps in the $f^\text{th}$ CMP setting  and $\#$ changes = $F$. 
\end{lemma}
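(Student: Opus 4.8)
The plan is to assemble the bound for all checking phases out of the per-change bound already established in Lemma \ref{lem:checkingphase}, together with the bound on the number of rounds from Lemma \ref{lem:numrounds2}. The key observation is that the algorithm incurs exploration steps inside the checking phase of a round $r$ only after the underlying CMP has changed away from $\hmdp_r$: while $\hmdp_r$ persists, the hypothesis $(\sK_r,\sP_r)$ remains valid and, by Definition \ref{def:expsteps}, no exploration step is counted. Hence the entire cost of all checking phases can be attributed to the changes that occur during checking phases, and Lemma \ref{lem:checkingphase} already caps the cost of each such change in round $r$ by $\frac{2 C_1 S_{(\hmdp_r)} A L^3}{\epsilon^3}\big(\log\frac{C_2 S_{(\hmdp_r)} A L}{\epsilon \delta'_r}\big)^6$ with probability $1-\delta'_r$.

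First I would argue that the number of changes charged to checking phases is at most $F$. Since every change in the whole run is counted exactly once in $F$ (Eq. \eqref{def:F}), summing the per-change bound over these at most $F$ changes produces the leading factor of $F$. Next I would make the per-change bound uniform in $r$ so that the sum collapses into a single $\max_f$ term. Substituting $\delta'_r = \tfrac{3\delta}{4\pi^2 r^2}$ turns the logarithm into $\log\frac{4\pi^2 C_2 r^2 S_{(\hmdp_r)} A L}{3\epsilon\delta}$, and invoking $r\le R\le F$ (Lemma \ref{lem:numrounds2}) together with $S_{(\hmdp_r)}\le \max_f S_f$ upper-bounds every summand by $\max_{f\in\{1,\dots,F\}}\big[\frac{2 C_1 S_f A L^3}{\epsilon^3}\big(\log\frac{4\pi^2 C_2 F^2 S_f A L}{3\epsilon\delta}\big)^6\big]$, since this expression is monotone increasing in $S_f$. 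Combining with the factor $F$ gives exactly the claimed bound.

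Finally I would assemble the failure probability by a union bound: the event $R\le F$ fails with probability at most $\delta/4$ (Lemma \ref{lem:numrounds2}), and each invocation of Lemma \ref{lem:checkingphase} fails with probability at most $\delta'_r$, so the total error is at most $\delta/4 + \sum_{r=1}^{R}\delta'_r \le \delta/4 + \tfrac{3\delta}{4\pi^2}\sum_{r=1}^{\infty}\tfrac{1}{r^2} = \delta/4 + \delta/8 < \delta/2$, as required.

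I expect the main obstacle to be the bookkeeping in this last step: one must justify charging a single $\delta'_r$ per round rather than per change, i.e.\ argue that within one checking phase the exploration-step cost is dominated by the single change that eventually triggers the transition to round $r+1$, while every other change during that phase either leaves the hypothesis valid (zero cost, cf.\ subcase 2(c) of Lemma \ref{lem:checkingphase}) or is itself the detected change. Making this attribution precise, and reconciling it with the ``single change'' scope of Lemma \ref{lem:checkingphase}, is the delicate part; the remainder is the substitution of $\delta'_r$ and a geometric-series union bound.
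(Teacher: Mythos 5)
Your proposal follows essentially the same route as the paper's proof: it invokes Lemma \ref{lem:checkingphase} per change, bounds the total number of charged changes by $F$ via Eq.~\eqref{def:F} and Lemma \ref{lem:numrounds2}, makes the summands uniform by substituting $\delta'_r = \tfrac{3\delta}{4\pi^2 r^2}$ together with $r \le R \le F$ and $S_{(\hmdp_r)} \le \max_f S_f$, and closes with the same union bound $\tfrac{\delta}{4} + \sum_r \delta'_r < \tfrac{\delta}{2}$. The bookkeeping subtlety you flag (charging $\delta'_r$ once per round rather than once per change) is present in the paper's own accounting as well, which likewise sums $\delta'_r$ over rounds while multiplying the per-change step bound by the number of changes $F_r$ in round $r$.
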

\begin{proof}
 Lemma \ref{lem:checkingphase} provides an upper bound on the number of exploration steps in the checking phase of a round $r$ due to a single change with error probability limited to $\delta'_r$. 
Due to the construction of our algorithm, only the changes in round $r$ can lead to exploration steps in the checking phase of round $r$. Let $F_r$ be the number of changes in round $r$. Then, the total number of exploration steps are at most
\begin{align*}
    &\sum_{r=1}^{R} \tfrac{ 2 F_r C_1 S_{(\hmdp_r)} A L^3}{\epsilon^3} \left( \log\tfrac{C_2 S_{(\hmdp_r)} AL}{\epsilon \delta'_r} \right)^6   \\
    &\leq \max_{f} \left[\tfrac{ 2 C_1 S_{f} A L^3}{\epsilon^3} \left( \log\tfrac{4 \pi^2 C_2 F^2 S_{f} AL}{3 \epsilon \delta} \right)^6 \right] \cdot \sum_{r=1}^{R}  F_r                           \\
    & \leq F \cdot  \max_{f} \left[\tfrac{ 2 C_1 S_{f} A L^3}{\epsilon^3} \left( \log\tfrac{4 \pi^2 C_2 F^2 S_{f} AL}{3 \epsilon \delta} \right)^6 \right]
\end{align*}
with error probability limited to $\left(\tfrac{\delta}{4} + \sum_{r=1}^{F} \delta'_r\right) < \tfrac{\delta}{2}$. In the first inequality, we use that $r \leq R \leq F$ with probability $1 - \tfrac{\delta}{4}$ (Lemma \ref{lem:numrounds2}) and  $S_{(\hmdp_r)} \leq \max_{f} S_f$.  
\end{proof}

\subsection{Proof of Theorem \ref{thm:main}}
\begin{proof}
The total number of exploration steps in all the rounds is simply the sum of the exploration steps in all the building phases and all the checking phases given by Lemma \ref{lem:allbuildingphase} and Lemma \ref{lem:allcheckingphase} respectively. Therefore the number of total exploration steps for all the rounds is at most
\begin{align*}
&\left(\sum_{f=1}^{F} \tfrac{ C_1 S_f A L^3}{\epsilon^3} \left( \log\tfrac{4 \pi^2 C_2 F^2 S_f AL}{3 \epsilon \delta} \right)^3 \right)^2   \\
& + F \cdot \max_{f \in \{1, \dots, F\}} \left[\tfrac{ 2 C_1 S_{f} A L^3}{\epsilon^3} \left( \log\tfrac{4 \pi^2 C_2 F^2 S_{f} AL}{3 \epsilon \delta} \right)^6 \right] 
\end{align*}
with probability at least $\left(1 - \left(\tfrac{\delta}{2} + \tfrac{\delta}{2}\right)\right)$ using a union bound. 
\end{proof}
\section{Concluding remarks}
\label{sec:Last}
We considered the problem of learning to explore autonomously in a non-stationary environment and proposed a pertinent performance measure. We gave a natural algorithm for the considered problem and proved an upper bound on the performance measure that scales with the square of the number of changes. 

Proving a lower bound for this problem setting remains for future work.
The solution strategy of first having a building phase (with multiple processes trying to build a hypothesis) and then a checking phase (where it is verified if the last built hypothesis is still true) could be used for other non-stationary learning problems. In particular, this strategy could be useful for the learning problems where each hypothesis building-process needs to act independently and cannot share findings. 

\newpage

\newpage

\iftoggle{long-version}{
\appendix
\renewcommand{\thesection}{\Roman{section}}
\renewcommand\thesubsection{\Alph{subsection}}
\section{Proof of Lemma \ref{lem:StreamCount}}
\label{Proof:StreamCount}
\begin{proof}
The number of initiated streams is equal to the highest stream number initiated so far. Let that be $\hat{\stream}$.
Since $\hat{\stream}$ is initiated on or before $q$, $(\hat{\stream} -  1)^2 + 1 \leq q$ (see \ref{step:stream}\ref{step:init} in Figure \ref{fig:alg}) which is equivalent to, 
\begin{equation}
    \hat{\stream} < \sqrt{q} + 1. \label{eq:StreamCount1}
\end{equation}
Since $\hat{\stream} + 1$ has not been initiated yet, $q \leq (\hat{\stream} + 1 -1)^2$ which translates to,
\begin{equation}
    \hat{\stream} \geq \sqrt{q} \label{eq:StreamCount2}.
\end{equation}
    Recall that both $\hat{\stream}$ and $q$ are integers $\geq$1. If $q$ is a perfect square, the only integer satisfying both Eq. $\eqref{eq:StreamCount1}$ and $\eqref{eq:StreamCount2}$ is $\sqrt{q} = \lceil \sqrt{q} \rceil$. If $q$ is not a perfect square, then Eq. $\eqref{eq:StreamCount2}$ reduces to $\hat{\stream} > \sqrt{q}$. And the only integer satisfying $\sqrt{q} < \stream < \sqrt{q} + 1$ is $\lceil \sqrt{q}\rceil$.
 \end{proof}   
 
\section{Proof of Lemma \ref{lem:StreamAlloc}}
\label{Proof:StreamAlloc} 
 \begin{proof}
    Claim 1 is a direct result of Lemma \ref{lem:StreamCount}. We prove claim 2 by induction on $b$. 
    Base case: $b =1$. At the end of $q = b^2 = 1$, only $1$ stream has been initiated and it has been active for $1$ quantum. \\ 
    Inductive case: Let's assume that the claim is true for $b = \hat{b}$ i.e at the end of quantum $q = \hat{b}^2$, exactly $\hat{b}$ streams have been initiated and each of them has been active for $\hat{b}$ quantums. At the next quantum i.e $\hat{b}^2 + 1$, stream $\hat{b} + 1$ will be initiated by the initiation rule and it will be active for the next $b$ quantums due to the allocation rule (ii). At this point, we are at the end of quantum $(\hat{b}+1) \cdot \hat{b}$ and
    all the $\hat{b}+1$ initiated streams have each been active for $\hat{b}$ quantums. Next, by virtue of the allocation rule (iii), each of the $(b+1)$ streams will be allocated $1$ quantum each till we are the end of quantum $ ((\hat{b}+1) \cdot \hat{b}) + (\hat{b}+1) = (\hat{b}+1)^2$. 
\end{proof}
}{} 

\end{document}